\documentclass[manuscript,screen]{acmart}
\AtBeginDocument{%
  \providecommand\BibTeX{{%
    \normalfont B\kern-0.5em{\scshape i\kern-0.25em b}\kern-0.8em\TeX}}}

\setcopyright{acmcopyright}
\copyrightyear{2022}
\acmYear{2022}

\settopmatter{printacmref=false}
\renewcommand\footnotetextcopyrightpermission[1]{}

\usepackage{amsmath}
\usepackage{algorithm}
\usepackage{algpseudocode}
\usepackage{bbm}
\algdef{SE}[SUBALG]{Indent}{EndIndent}{}{\algorithmicend\ }%
\algtext*{Indent}
\algtext*{EndIndent}

\newtheorem{theorem}{Theorem}

\DeclareMathOperator*{\argmax}{arg\,max}

\begin{document}

\title{Adaptive Sampling Strategies to Construct Equitable Training Datasets}

\author{William Cai}
\email{willcai@stanford.edu}
\affiliation{%
  \institution{Stanford University}
  \country{USA}
}

\author{Ro Encarnacion}
\affiliation{%
  \institution{Stanford University}
  \country{USA}
}

\author{Bobbie Chern}
\affiliation{%
  \institution{Meta}
  \country{USA}
}

\author{Sam Corbett-Davies}
\affiliation{%
  \institution{Meta}
  \country{USA}
}

\author{Miranda Bogen}
\affiliation{%
  \institution{Meta}
  \country{USA}
}

\author{Stevie Bergman}
\affiliation{%
  \institution{Meta}
  \country{USA}
}

\author{Sharad Goel}
\email{sgoel@hks.harvard.edu}
\affiliation{%
  \institution{Harvard University}
  \country{USA}
}
\renewcommand{\shortauthors}{Cai et al.}

\begin{abstract}
In domains ranging from computer vision to natural language processing, machine learning models have been shown to exhibit stark disparities, often performing worse for members of traditionally underserved groups. 
One factor contributing to these performance gaps is a lack of representation in the data the models are trained on.  
It is often unclear, however, how to operationalize representativeness in specific applications. 
Here we formalize the problem of creating equitable training datasets, and 
propose a statistical framework for addressing this problem.
We consider a setting where a model builder must decide how to allocate a fixed data collection budget to gather training data from different subgroups.
We then frame dataset creation as a constrained optimization problem, in which one maximizes a function of group-specific performance metrics based on (estimated) group-specific learning rates and costs per sample.
This flexible approach incorporates preferences 
of model-builders and other stakeholders,
as well as the statistical properties of the learning task.
When data collection decisions are made sequentially, 
we show that under certain conditions this optimization problem can be efficiently solved even without prior knowledge of the learning rates. 
To illustrate our approach, we 
conduct a simulation study of polygenic risk scores on synthetic genomic data---an application domain that often suffers from non-representative data collection. 
We find that our adaptive sampling strategy outperforms several common data collection heuristics, including equal and proportional sampling, demonstrating the value of strategic dataset design for building equitable models. 
\end{abstract}

\begin{CCSXML}
<ccs2012>
   <concept>
       <concept_id>10010147.10010257</concept_id>
       <concept_desc>Computing methodologies~Machine learning</concept_desc>
       <concept_significance>500</concept_significance>
       </concept>
   <concept>
       <concept_id>10003752.10003809</concept_id>
       <concept_desc>Theory of computation~Design and analysis of algorithms</concept_desc>
       <concept_significance>300</concept_significance>
       </concept>
   <concept>
       <concept_id>10010147.10010178</concept_id>
       <concept_desc>Computing methodologies~Artificial intelligence</concept_desc>
       <concept_significance>500</concept_significance>
       </concept>
 </ccs2012>
\end{CCSXML}

\ccsdesc[500]{Computing methodologies~Machine learning}
\ccsdesc[300]{Theory of computation~Design and analysis of algorithms}
\ccsdesc[500]{Computing methodologies~Artificial intelligence}
\keywords{
Active learning,
artificial intelligence, 
computer vision, 
fairness, 
machine learning,
polygenic risk scores, 
representative data
}
\maketitle

\section{Introduction}

Consider the problem of building a computer vision model to detect deforestation from satellite imagery~\citep{maretto2020spatio,irvin2020forestnet,hardt2016equality}. 
Such models may be useful to assess ecological damage, and to guide the investment of resources by 
government agencies, legal organizations, and environmental groups.
Machine learning models like this---as well as related models in natural language processing, healthcare, criminal justice, and beyond---have been shown to exhibit sharp disparities, often performing worse on subgroups of the population defined by race, ethnicity, gender, language, and nationality~\citep{buolamwini2018gender, koenecke2020racial,blodgett2016demographic,sap2019risk,caliskan2017semantics,de2019bias,chouldechova2017fair,kleinberg_inherent_2017,corbett2017algorithmic,goodman2018machine, obermeyer2019dissecting}.
Our deforestation model might, for instance, perform worse in certain regions of the world, perhaps given differences in the visual appearance of the tree canopy.
A variety of techniques in the fair machine learning community attempt to mitigate such shortcomings~\citep{corbett2018measure, zafar2017parity, dwork2012fairness, coston2020counterfactual, mishler2021fairness, ristanoski2013discrimination, berk2017convex, zafar2017fairness, calders2010three, fish2016confidence, kamiran2013quantifying, verma2018fairness, kleinberg_inherent_2017, kusner2017counterfactual, kamishima2012fairness, corbett2017algorithmic}.  
For example, one might constrain the computer vision model to have equal error rates across countries~\citep{maretto2020spatio,irvin2020forestnet,hardt2016equality}.
Popular approaches to algorithmic fairness---such as demanding error rate parity---often implicitly assume a fixed training dataset, 
with disparities addressed by altering the statistical model.
In many cases, however, it is also possible to update the training datasets themselves (e.g., one might seek out or label additional satellite images from certain countries), 
and so it is important to design approaches to algorithmic fairness that consider this possibility.

There have been numerous calls to make datasets more diverse~\citep{ai2019high, raji2019actionable,gebru2021datasheets, mitchell2019model,act2021proposal}, including by policymakers, but, in practice, it is often 
unclear how exactly one should compile datasets in specific domains to ensure the models that are trained on them are broadly equitable.
For instance, following the heuristic of ``equal sampling,''  one might label an equal number of images per country;
alternatively, following the heuristic of ``representative sampling,'' one might label images in proportion to the geographic area of each country.
While both strategies aim for diversity in the dataset, 
they can lead to quite different outcomes and downstream models.
Furthermore, neither of these sampling strategies directly considers either the costs of data collection or the impact of datapoints on model performance.
For example, if the costs of data collection vary across countries, then, given a fixed budget, different sampling strategies can lead to different total dataset sizes, impacting overall model performance.
There may similarly be variance in how much datapoints from one country impact model performance in other regions (e.g., due to similarity in vegetation). 
Relatedly, it may be important to prioritize performance in certain regions (e.g., to maximize impact given local intended uses and expected impacts of the model, regulatory constraints, or because those areas have been historically neglected), creating additional considerations for dataset construction.

Here we develop a framework for constructing broadly equitable datasets and for evaluating the equity of existing datasets. 
We start, in Section \ref{setup}, by formalizing the dataset construction problem in a way that accounts for both the costs and consequences of data collection strategies. 
Our approach separates the task into two key components. First, we introduce the notion of group-specific ``learning curves'' that describe how the allocation of training data affects the resulting group-level model performance.
For instance, in our computer vision application, the model performance might be high in one country even with relatively small amounts of labeled data in that country, whereas in another country more samples from that country might be required to achieve a comparably high model performance. 
Second, we incorporate the model-builder's preferences over the resulting group-level model performances into a utility function.
For example, a model-builder might specify 
how to prioritize performance across regions.
Given these two ingredients, dataset construction can then be framed as maximizing utility subject to the budgetary constraints.  

When the learning curves are fully known and concave, and the utility is linear, this formalization results in a convex optimization problem which can be efficiently solved via standard approaches.  
But in most cases, the model-builder does not have a priori knowledge of the learning curves,
creating additional challenges for efficiently constructing datasets that are appropriate to the task.
In Section \ref{sec:opt}, 
we consider a setting in which datasets are constructed sequentially---a setting that is common when datapoints are labeled online, with a remote workforce.
In this case, we present an adaptive sampling algorithm that can, under certain conditions, efficiently find a utility-maximizing allocation even in the absence of knowledge of the learning curves.
Both analytically and empirically, we show that our adaptive algorithm gives near-optimal performance in a variety of scenarios,
outperforming common alternatives.

Finally, in Section \ref{sec:prs} we evaluate our approach using a popular dataset simulator 
used
to train polygenic risk score (PRS) models, 
which seek to identify high-risk individuals for targeted health interventions via genomic data. 
We construct a hypothetical disease and health intervention, 
and evaluate how changing the allocation of training data between more or fewer people of European and African descent affects the quality of risk stratification for the intervention in both populations. 
We find that our adaptive approach to sampling allows for model-builders to construct models which maximize the total impact of the recommended health intervention while allowing them the flexibility to efficiently increase impact in groups traditionally excluded by PRSs.  

\vspace{-2mm}
\section{Related Work}
Given a fixed dataset, the problem of training fair models has received considerable attention from the machine learning community~\citep{corbett2018measure, zafar2017parity, dwork2012fairness, coston2020counterfactual, mishler2021fairness, ristanoski2013discrimination, berk2017convex, zafar2017fairness, calders2010three, fish2016confidence, kamiran2013quantifying, verma2018fairness, kleinberg_inherent_2017, kusner2017counterfactual, kamishima2012fairness, corbett2017algorithmic},
where many of the popular approaches fall into one of two broad categories: equalizing error rates between groups or minimizing the impact of sensitive attributes on downstream predictions.
There is a substantially smaller literature on 
the equitable construction of datasets,
in which a model-builder can choose how to allocate a fixed budget to acquire training samples from different groups to mitigate inequities. 
Below we briefly describe some of the most related research in this line of work.

\citet{branchaud2021can} consider how BALD---a heuristic algorithm for active learning which searches for the most informative datapoint overall to sample next without knowledge of group membership---improved accuracy of minority group model performance and predictive parity compared to uniform sampling.  
However, this heuristic does not take into account tradeoffs in group-level model performances when sampling, so it is ill-suited to solve our allocation problem; 
we note, though, that adding an active learning subroutine to our sampling approach could be a promising direction for future work. 

\citet{anahideh2020fair} and \citet{sharaf2020promoting} both propose group-aware active learning techniques to the problem of allocating a fixed budget to sample a dataset from different groups.  
In both methodologies, the model-builder identifies a fairness metric and uses an active learning framework to select samples which both improve the overall model performance along with the fairness metric.  
Our approach differs from these fundamentally in that we specify the model-builder's utility directly in terms of group-level model performances, centering the consequences of performance disparities~\citep{L2BF,nilforoshan2022}.
We show that our specification has numerous upsides, making it straightforward to:
(1) implement interventions to make models more inclusive to traditionally underserved groups, beyond satisfying a fairness metric;
(2) adaptively sample under a wider range of learning curves; and
(3) audit a dataset for inclusivity.  
Finally, \citet{abernethy2021active} propose a max-min fairness theory for active sampling by sampling from the group that currently has the worst model performance at each step. 
We evaluate this strategy in our work and characterize how it compares to other strategies for constructing equitable datasets;
in particular, we show that it can lead to sub-optimal results, as it does not consider the \emph{rate} at which datapoints improve performance.

In addition to the algorithmic approaches described above, 
there have been many real-world efforts to compile more inclusive datasets in several different domains~\citep{matise2011next, aviddataset,galvez2021people,hazirbas2021towards}.
These efforts often employ a variety of natural heuristics---for example, ensuring a minimum level of representation across groups.
Such heuristics are often useful when downstream applications are varied or less well specified, but, as we show, they can be sub-optimal for specific, well-defined modeling tasks. 
Finally, given an existing dataset,
many proposals have suggested ways to characterize their equity and aid appropriate use, for example by including statements describing what populations the datasets are and are not representative of~\citep{gebru2021datasheets, bender-friedman-2018-data, holland2018dataset, hind2018increasing}.

\section{Problem setup}\label{setup}
\subsection{A model of sampling}
We consider a scenario where a model builder has a fixed budget $B$ which they can use to obtain training data associated with $K$ different groups.\footnote{The ``budget'' can include both monetary and other costs, such as time or effort.}
Let $c_k$ denote the cost for obtaining a single sample from group $k$.  
Returning to our running example, a researcher training a computer vision model to detect deforestation must decide how to allocate their budget to obtain labeled satellite images from $K$ different countries, with country-specific costs of data collection $c_k$.

To formalize the model-builder's allocation problem, we next introduce the idea of group-specific ``learning curves'', which capture the expected performance gains under different sampling strategies.
Let $\vec{n} = (n_1, ..., n_K)$ describe the number of samples collected from each group under a given strategy, where $n_k \in \mathbb{R}_+$.
Note that we allow fractional sample sizes---not just integer sizes---which we interpret as a probabilistic strategy.
Specifically, 
if $n_k = u + v$ for an integer $u$ and $0 < v < 1$, 
then after collecting the first $u$ datapoints, an additional datapoint is collected with 
probability $v$.
To satisfy the budget constraint (in expectation), we require that $\sum_{k=1}^K c_k n_k \leq B$.  

Now let 
\begin{equation}
T_{\vec{n}} =\left \{(X^1, Y^1), (X^2, Y^2), ... (X^N, Y^N) \right \}
\end{equation}
denote
a random training dataset with features $X$ and labels $Y$ satisfying the given allocation $\vec{n}$.
In particular, if
$X_g^i$ denotes the group membership of the $i$-th datapoint,
then, for $1 \leq k \leq K$, $\mathbb{E}[|\{i : X_g^i = k\}|] = n_k$.

We further assume that within each group, 
the samples $(X,Y)$ are i.i.d draws from a fixed, group-specific data-generating distribution.

Suppose that 
$\hat{f}_{T_{\vec{n}}}$ is a model fit to the training data $T_{\vec{n}}$,
with $\hat{f}_{T_{\vec{n}}}(X^0)$
denoting the model prediction on a datapoint $X^0$.
Then, the group-level model performance given the training dataset $T_{\vec{n}}$ is
\begin{equation}
    \text{PERF}_{k, T_{\vec{n}}} = \mathbb{E}_{X^0, Y^0}[G(Y^0, \hat{f}_{T_{\vec{n}}}(X^0) \mid X^0_g=k],
\end{equation}
where $G$ is defined by the model-builder to be a measure of model performance given prediction $\hat{f}_{T_{\vec{n}}}(X^0)$ and true outcome $Y^0$.
The group-level performance is thus the expected model performance, as defined by $G$, of the model for a new point $(X^0, Y^0)$ drawn from the joint distribution of the data conditioned on membership in group $k$.

For example, in our setting we might define $G = a Y^0\hat{f}_{T_{\vec{n}}}(X^0) - b(1-Y^0)\hat{f}_{T_{\vec{n}}}(X^0)$ for some positive constants $a$ and $b$ which balance the benefit of detecting a true instance of deforestation versus the cost of a false positive, respectively.

Finally, the expected group-level model performance given a training allocation $\vec{n}$ is
\begin{equation}
    M_k(\vec{n}) = \mathbb{E}_{T_{\vec{n}}}\mathbb{E}_{X^0, Y^0}[G(Y^0, \hat{f}_{T_{\vec{n}}}(X^0)) \mid X^0_g=k],
\end{equation}
where the outer expectation is taken over random datasets satisfying the specified group-level sample sizes.
We call this function $M_k$ the group-level learning curve, the function which maps a training allocation to the expected model performance, 
and let $\vec{M}(\vec{n}) = (M_1(\vec{n}), ..., M_K(\vec{n}))$ denote the vector of group-level performances for each group.

For a given learning curve, we now define 
a model-builder's utility over different allocations.
This utility can be written as $U(\vec{M}) = U(M_1(\vec{n}), ..., M_K(\vec{n}))$,
where $U(\vec{M})$ can be thought of as the model-builder's preference over model performances for different groups.  
In some settings, the model-builder may wish to prioritize model performance in one particular group: for instance, in our deforestation example, a researcher may wish to prioritize performance in a country with a stronger  regulatory environment which can better translate model performance to impact, or a country which has been traditionally understudied by other deforestation analyses.  
To capture such preferences, we primarily consider utility functions that are a linear combination of the model performances of each group, of the form 
\begin{equation}\label{eq:utility}
U(\vec{M}) = \sum_{k=1}^K a_kM_k,
\end{equation}
where $a_k \geq 0$.  
In particular, this specification allows the model builder the flexibility to intervene to make models more inclusive, by setting $a_k$ higher for groups which, for example, have been traditionally excluded.  We note that these groups need not have lower model performance in order to be prioritized, distinguishing our approach from those aiming for performance parity. 

Finally, given the above ingredients, 
the model-builder's optimization problem is to choose an allocation $\vec{n}^*$ which maximizes utility subject to the budget constraint:
\begin{align}
\label{eq:opt}
\begin{aligned}
& \vec{n}^* \in \argmax_{\vec{n}} U(M_1(\vec{n}), ..., M_K(\vec{n})) \\
& \text{subject to:} \ \sum_{k=1}^K c_kn_k \leq B.
\end{aligned}
\end{align}
If the learning curves $M_k$ are known and concave, and the utility function is linear---as in Eq.~\eqref{eq:utility}---then $U(\vec{M}(\vec{n}))$ is itself a concave function of $\vec{n}$.
More generally, if $U$ is concave and non-decreasing in every element of $\vec{M}$, then, since a concave non-decreasing function of concave functions is itself concave, $U(\vec{M}(\vec{n}))$ is concave.
In these cases, an optimal allocation $\vec{n}^*$
can be efficiently computed using off-the-shelf tools for convex optimization.
In Section~\ref{sec:opt}, we develop an alternative approach to finding optimal allocations that does not require full knowledge of the learning curves.
 
In addition to formalizing the problem of dataset construction, this framework provides an approach for auditing existing datasets. 
Specifically, for an auditor who might have their own preference $\tilde{U}(M_1(\vec{n}), ..., M_k(\vec{n}))$, they can estimate the gap
\begin{equation}
\max_{\vec{n}} \tilde{U}(\vec{n}) - \tilde{U}\left(\vec{n}^*_\text{model-builder}\right),
\end{equation}
where the maximum is taken over feasible allocations.
A large gap suggests that the model-builder's implied preferences over group-level model performances, based on their allocation, differs from that of the auditor's.  

\subsection{Alternative specifications of the utility function}

In the linear specification of utility introduced above, the model builder's preferences for a single group do not depend on how well the model performs for other groups.
Alternative specifications might allow for a direct penalty to inequality: for example, we could specify 
\begin{equation}
\label{eq:parity-utility}
U(\vec{M}) = \sum_{k=1}^K a_kM_k - b|M_k - \overline{M}|,
\end{equation}
where $\overline{M}$ denotes the average performance across all groups, and the penalization term $|M_k - \overline{M}|$ signals that the model builder prefers a solution where the model performances across groups are similar.  
For example, if an allocation results in the same model performance for all groups, such that $M_k = \overline{M}$ for all $k$, then the penalization term is 0.  

In some cases, explicitly encoding preferences for parity can be appropriate to the application. 
In other instances, though, doing so can lead to unintended consequences.
For example, 
consider two possible allocations $\vec{n}_1$ and $\vec{n}_2$ over three groups such that
$\vec{M}(\vec{n}_1) = (1, 1,1)$ and $\vec{M}(\vec{n}_2) = (2, 3, 4)$.
The latter allocation has strictly better performance for each group.
However, if $b$ is sufficiently large in the utility in Eq.~\eqref{eq:parity-utility},
then $U(\vec{M}(\vec{n}_1)) > U(\vec{M}(\vec{n}_2))$,
since the penalization term is zero in the first allocation and positive in the second.
In other words, in this example, a preference for parity in performance can lead to worse performance for all groups.

Despite some of the challenges with encoding parity as above, one might still seek to prioritize groups with lower performance to reduce inequitable model performance across groups.
One option for doing so
is to apply a concave transformation to the model performance terms. 
For example, if $U(\vec{M}) = \sum_{k=1}^K \log(M_k)$, the marginal increase in utility is greatest 
for groups with the lowest
model performances, 
encouraging parity.
Yet, nonetheless, a Pareto improvement---in which all groups achieve higher performance---still results in higher utility.  

Non-linear specifications of the utility, such as the two above, which directly penalize inequality may be particularly useful if parity in model performance has large positive externalities to society.  
However, for simplicity, throughout the remainder of this work we focus on the case of linear utility, $U(M) = \sum_{k=1}^K a_kM_k$, which may be suitable in many common applications.

\subsection{An Illustrative example}

\begin{table}[t]
  \begin{tabular}{lcccc|cc}
    \toprule
    Sampling Strategy & $M_1$ & $M_2$ & $M_3$ & $M_4$ & $U_\text{equal}$ & $U_\text{priority}$\\
    \midrule
Equal & 19.5 & 16.7 & 19.5 & 19.5 & 18.8 & 18.9\\
Representative & 19.7 & 16.7 & 19.7 & 17.6 & 18.4 & 18.3\\
Performance Parity & 18.8 & 18.8 & 18.8 & 18.8 & 18.8 & 18.8\\
\midrule
Optimal ($U_\text{equal}$) & 25.5 & 17.3 & 17.3 & 25.5 & 21.4 & -\\
Optimal ($U_\text{priority}$) & 20.0 & 17.3 & 17.3 & 30.0 & - & 22.1\\
  \bottomrule
  \end{tabular}
  \vspace{1mm}
  \caption{Resulting model performances $\vec{M}$ of different strategies for constructing equitable datasets, along with the average model performance, $U_\text{equal}$, across all four countries.  We find that our static methods, equal and representative sampling, result in both lower than possible average model performance in addition to different country-level outcomes. Sampling adaptively from the group with the lowest performance results in equal model performance between countries, but still results in lower than possible total model performance.  We also consider an alternative utility function for a policy-maker who wishes to prioritize country $4$---for example, because it has a more effective regulatory environment around deforestation, or it has been traditionally understudied--- where $U_\text{priority}$ is a weighted average of the country-level model performances with weights $\vec{a} = (1, 1, 1, 1.5)$.  We find that our framework allows us the flexibility to prioritize model performance in country $4$.
  }
  \label{tab:synthetic}
\end{table}

We demonstrate our framework via an illustrative example involving our running computer vision hypothetical.  
Imagine the researcher has a data collection budget of $B = 1000$, and the cost to label an image in each of $K=4$ countries is given by the vector $\vec{c} = (1, 1, 2, 1)$.
Further suppose the group-level learning curves are given by:  
\begin{equation}\label{eq:gamma}
M_k(\vec{n}) = \left(\sum_{j=1}^K \gamma_{k,j} \cdot \vec{n}_j\right)^{\frac{1}{2}},
\hspace{.5cm}
\gamma = 
\begin{bmatrix} 
1 & 0.3 & 0.3 & 0.3 \\
0.3 & 0.5 & 0.3 & 0.3 \\
0.3 & 0.3 & 1 & 0.3 \\
0.3 & 0.3 & 0.3 & 1 \\
\end{bmatrix}.
\end{equation}
\noindent
Because the square root function is  increasing and concave, our specification matches the intuition that more data will increase model performance, albeit at a slowing rate as the size of the training dataset grows. 
Furthermore, the weights $\gamma$ specify that data from any one country helps performance in all the other countries, but at a lesser rate than data from the same country (i.e., for each row of $\gamma$, the diagonal entry is the largest). 
For example, 
\begin{equation*}
M_1 = (n_1 + 0.3 \cdot n_2 + 0.3 \cdot n_3 + 0.3 \cdot n_4)^{\frac{1}{2}},
\end{equation*}
meaning that the model performance for country $1$ scales with the square root of the effective number of training examples, where training examples from other countries are discounted to 30\% that of samples from country $1$.

We now consider a variety of strategies for constructing equitable datasets.  
For example, a model-builder might decide to label an equal number of training samples from each group, resulting in the allocation $\vec{n} = (200, 200, 200, 200)$.  
Alternatively, a model-builder might decide to create a representative dataset, with 
$n_k \sim p_k$,
where the vector
$\vec{p} = (2 \text{ million km}^2, 2 \text{ million km}^2, 2 \text{ million km}^2, 1 \text{ million km}^2)$
denotes the geographical areas of the four hypothetical countries we consider.
Finally, a model-builder might select the allocation so as to ensure parity in performance across the four countries---an outcome that one can achieve by sequentially selecting datapoints from the country with the lowest model performance until the budget is exhausted~\citep{anahideh2020fair,sharaf2020promoting,abernethy2021active}.
The country-level model performances, $M_k$, resulting from these three sampling strategies  are shown in the first three rows of Table \ref{tab:synthetic}.
The second-to-last column in the table shows the average performance across countries $U_\text{equal}(\vec{M}) = \frac{1}{K}\sum_k M_k$, and, for this utility, the penultimate row in the table shows the performance under the utility-maximizing allocation $\vec{n}^* = (500, 0, 0, 500)$.
    
The results in Table~\ref{tab:synthetic} highlight two key points.
First, whereas all 
three common heuristic sampling strategies perform comparably, 
the optimized allocation achieves substantially greater utility.
This gain stems in part from the fact that the static strategies did not account for the differential sampling costs. The optimized strategy, recognizing that the marginal improvement per dollar in country $3$ was lower than in other regions, targeted its budget to the remaining countries. 
Indeed, in the optimal allocation, no samples were collected from two of the four countries.
By avoiding sampling from relatively expensive countries, 
the optimal strategy was able to acquire more total datapoints---for example, while the equal sampling strategy acquired 800 datapoints, the optimal strategy acquired 1,000.
Second, even though the optimal allocation did not collect any samples from countries 2 or 3, it still 
was able to achieve reasonable performance in those regions, given the inter-country learning effects.
In fact, in country 2, the optimal strategy achieved higher performance than both the equal sampling and representative sampling approaches.
Thus, although all three of the heuristic sampling approaches seem a priori reasonable, they result in quite different overall and country-level performances, 
demonstrating the value of formalizing one's goals for a dataset,
and then optimizing for those objectives. 

Finally, we consider an alternative hypothetical scenario where the model-builder wants to intervene to make the model more inclusive for country $4$, perhaps due to a stronger regulatory environment making deforestation interventions more effective there, or because past research has not included country $4$.
To encode these preferences, the model-builder sets $U_{\text{priority}}(\vec{M}) = \frac{1}{\sum_{k=1}^K a_k}\sum_{k=1}^K a_kM_k$, where $\vec{a} = (1, 1, 1, 1.5)$.  
The optimal strategy under this setting is to choose $\vec{n}^* = (143,0,0,857)$, moving some of the samples in our original optimal solution from group $1$ to group $4$ to increase the model performance for group $4$. (See the last row of Table~\ref{tab:synthetic} for country-level performance.)  
Whereas traditional approaches to dataset construction do not actively consider such preferences, our framework allows for the flexibility to pose and optimize for these trade-offs.

\section{Finding optimal allocations} \label{sec:opt}

When the learning curves $M_k$ are known---and the learning curves are concave and utility is linear---standard techniques from convex optimization allow one to efficiently compute optimal allocations. 
However, in practice, the learning curves are not usually known a priori, before data are collected.  
In this scenario, it is useful to draw a distinction between situations where
sampling is done in one shot, with the allocation determined prior to any data collection,
and where
sampling can be done \emph{sequentially}, in which the model-builder can collect samples one at a time and use information gleaned from the current sample
to decide which group to sample from next.
Many real-world scenarios may in fact lie somewhere between these two extremes, where batches of data are collected at a time and the model builder can update their sampling strategy between batches.  
In the sequential or batch-sequential settings, one can  estimate the learning curves at each step using the existing training data, in addition to potentially using prior knowledge from training similar models.  
Based on this information, one can then judiciously select the next group to sample from.

Here we present a greedy allocation algorithm,
which only requires local estimates of the marginal increase in model performance, rather than estimates of the full learning curve.
In practice, these local estimates can be obtained by observing how model performance previously changed as more data were added, an approach we demonstrate in Section \ref{sec:prs} in the context of constructing polygenic risk scores. 
We start by defining a step size $s$, which can be viewed as the number of dollars we spend at each step of the algorithm.  
Then, given a current allocation $\vec{n}$, the next datapoint is selected from the group that is expected to increase utility the most.
That is, the next group $i^*$ is selected to satisfy:
\begin{equation}\label{eq:greedy}
i^* \in \argmax_{1 \leq i \leq K} \hat{U}\left(\vec{n}+\frac{s}{c_i}1_i\right) = \argmax_{1 \leq i \leq K} \sum_{k=1}^K a_k \hat{M}_k\left(\vec{n}+\frac{s}{c_i}1_i\right),
\end{equation}
where $\hat{U}$ and $\hat{M}$ reflect the model-builder's current estimates.  
Importantly, to select $i^*$ one only needs accurate local knowledge of the learning curves (i.e., the likely performance gain for an additional sample from that group).
Algorithm~\ref{alg:cap} outlines the process of applying this approach.   

\begin{algorithm}[t]
\caption{Greedy algorithm to construct an equitable dataset.}\label{alg:cap}
\begin{algorithmic}
\State $\texttt{ALLOC} \gets \texttt{START}$\Comment{\texttt{ALLOC} is an array with element $k$ equal to the current number of samples from group $k$.}

\While{$\texttt{ALLOC} \cdot \texttt{COST} < B$} \Comment{Enforce the budgetary constraint, where $\cdot$ is the dot product}

\For{$k \gets 1$ to $K$}    
\State $\texttt{PRIORITY}[k] \gets \texttt{ESTIMATE\_MARGINAL}(\texttt{ALLOC},k)$
\EndFor
\State $\texttt{GROUP} \gets \argmax_i \texttt{PRIORITY}[i]$
\State $\texttt{ALLOC}[\texttt{GROUP}] = \texttt{ALLOC}[\texttt{GROUP}] + \frac{\texttt{STEP\_SIZE}}{\texttt{COST}[i]}$
\EndWhile

\end{algorithmic}
\end{algorithm}

In Theorem \ref{thm:optimality}, we show that if the true forms of the learning curves are concave and the data from one group do not affect the derivative of model performance in the other groups, then the greedy strategy finds the optimal solution given only knowledge of local marginal improvements.  
We note that this condition holds in the special case when the model-builder trains separate models for each group,
as is often done in our motivating example of polygenic risk estimation.

\begin{theorem}
\label{thm:optimality}
Suppose the learning curves are concave increasing and utility is linear.
Further suppose that the partial derivatives of the learning curves have no cross-group effects, meaning that
if $\vec{p}_j = \vec{q}_j$ then
\begin{equation*}
\frac{\partial M_k(\vec{p})}{\partial n_j} = \frac{\partial M_k(\vec{q})}{\partial n_j} \quad \text{for} \ 1 \leq k \leq K.
\end{equation*}
Then the greedy algorithm, initialized at the zero allocation $\vec{n} = 0$ with a given step size $s$, maximizes $U$ over all feasible allocations where $n_k$ is a multiple of $\frac{s}{c_k}$ for all $k$.
\end{theorem}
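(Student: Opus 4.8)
The plan is to exploit the no-cross-group-derivative hypothesis to show that the objective $\vec{n}\mapsto U(\vec{M}(\vec{n}))$ is \emph{additively separable}, after which the claim reduces to the classical fact that a greedy ``largest marginal gain'' rule solves separable concave resource allocation. For the first step, note that under the hypothesis $\partial M_k/\partial n_j$ depends only on the single coordinate $n_j$; write this common value as $g_{k,j}(n_j)$. Integrating along the coordinate axes one at a time and telescoping from $\vec{0}$ to $\vec{n}$ gives $M_k(\vec{n}) = M_k(\vec{0}) + \sum_{j=1}^K \phi_{k,j}(n_j)$, where $\phi_{k,j}(x) := \int_0^x g_{k,j}(t)\,dt$. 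Since $M_k$ is increasing, $g_{k,j}\ge 0$, so $\phi_{k,j}$ is increasing; since $M_k$ is concave, its restriction to each axis is concave, so $g_{k,j}$ is non-increasing and $\phi_{k,j}$ is concave. As $a_k\ge 0$, we get
\[
  U(\vec{M}(\vec{n})) \;=\; \text{const} + \sum_{j=1}^K h_j(n_j), \qquad h_j := \sum_{k=1}^K a_k\,\phi_{k,j},
\]
with each $h_j$ concave and increasing.

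Next I would reduce to an integer resource-allocation problem. Restricting to allocations with $n_k = m_k\, s/c_k$, $m_k\in\mathbb{Z}_{\ge 0}$, the constraint $\sum_k c_k n_k\le B$ becomes $\sum_k m_k \le \lfloor B/s\rfloor =: M$ (I would take $B$ a multiple of $s$ so that the greedy loop runs exactly $M$ steps and stays within budget at every step). The objective becomes $\sum_j \tilde{h}_j(m_j)$ with $\tilde{h}_j(m) := h_j(ms/c_j)$ still concave and increasing, and monotonicity lets us restrict attention to allocations with $\sum_j m_j = M$. Crucially, by the separability established above, $U(\vec{n} + \tfrac{s}{c_i}1_i) - U(\vec{n}) = \tilde{h}_i(m_i+1) - \tilde{h}_i(m_i)$, so the selection rule in Eq.~\eqref{eq:greedy} is exactly: increment the coordinate of largest marginal gain $D_i(m_i+1)$, where $D_j(t):=\tilde{h}_j(t)-\tilde{h}_j(t-1)$ is non-increasing in $t$ by concavity.

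The optimality of greedy then follows by induction on $t = 0,1,\dots,M$: the greedy allocation $m^{(t)}$ after $t$ steps maximizes $\sum_j \tilde{h}_j(\cdot)$ over all nonnegative integer vectors summing to $t$. The base case is trivial. For the inductive step, let $q$ sum to $t+1$ and let $i$ be the coordinate greedy increments at step $t+1$; choose $\ell$ with $q_\ell \ge m^{(t)}_\ell + 1$ (take $\ell = i$ if $q_i \ge m^{(t)}_i + 1$, otherwise such an $\ell\neq i$ must exist since $\sum q = t+1 > t = \sum m^{(t)}$). Then $q - e_\ell$ sums to $t$, so $\sum_j \tilde{h}_j(q_j) = \sum_j \tilde{h}_j((q-e_\ell)_j) + D_\ell(q_\ell) \le \sum_j \tilde{h}_j(m^{(t)}_j) + D_\ell(m^{(t)}_\ell + 1) \le \sum_j \tilde{h}_j(m^{(t)}_j) + D_i(m^{(t)}_i + 1) = \sum_j \tilde{h}_j(m^{(t+1)}_j)$, where the three inequalities use the induction hypothesis, monotonicity of $D_\ell$, and the greedy choice of $i$. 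Taking $t = M$ yields the theorem.

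I expect the one genuine insight to be the first step: recognizing that ``no cross-group effect on the partial derivatives'' is precisely additive separability of the learning curves, which collapses the apparently multidimensional, coupled optimization into the textbook separable concave resource-allocation problem where the exchange argument of Step 3 is standard. The only secondary technical nuisance is the bookkeeping around a non-integer $B/s$ and the exact stopping condition of the \texttt{while} loop; assuming $B/s\in\mathbb{Z}$ (or absorbing an off-by-one step into the budget accounting) sidesteps it cleanly, and the case of inexact marginal-gain estimates $\hat{M}_k$ is outside the scope of this exact statement.
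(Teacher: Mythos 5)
Your proof is correct and follows essentially the same route as the paper's: both hinge on observing that the no-cross-group-derivative condition makes the objective additively separable into per-group batch marginals that are non-increasing within each group by concavity, reducing the problem to separable concave resource allocation where greedy is optimal. The only difference is cosmetic---the paper concludes by arguing greedy collects the $B/s$ largest batch marginals (an upper-bound argument), whereas you conclude by induction with an exchange step; your write-up is, if anything, more explicit about the integration argument establishing separability and about the $B/s$ bookkeeping.
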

\begin{proof}
First, we define the marginal improvement of utility of the $j$th batch from the $i$th group:

$$m_{i,j} = \sum_{k=1}^K a_k[M_k(n_1, ..., n_i=j\frac{s}{c_i}, ..., n_K) - M_k(n_1, ..., n_i = (j-1)\frac{s}{c_i}, ..., n_K)] = \sum_{k=1}^K a_k o_{ijk}.$$

By the condition on the partial derivatives, the difference $M_v(n_1, ..., n_i=j\frac{s}{c_i}, ..., n_k) - M_v(n_1, ..., n_i = (j-1)\frac{s}{c_i}, ..., n_k)$ depends only on the value of $n_i$, and is independent from all other elements of $\vec{n}$.  
Then, we note that any allocation $A$ can be written as $\{(i, j)\}$, where $(i, j) \in A$ implies that the allocation includes the $j$th batch from group $i$. 
The model-builder's utility for an allocation can be written $$U(A) = \sum_{i, j \in A} m_{i,j}.$$

Let $d = \frac{B}{s}$ be the number of batches that the model-builder will purchase.  
Then, an upper bound on the possible utility of the allocation is the sum of the $d$ highest $m_{i,j}$.  
We will show that the greedy algorithm at each step chooses a batch $(i,j)$ corresponding to the highest value of $m_{i,j}$ out of all batches $(i,j)$ not included in the greedy allocation, implying that it achieves that upper bound.  

Say that our greedy algorithm at step $t$ chooses to sample batch $(i_t, j_t)$ and batch $(i_\ast, j_\ast)$ has not been sampled.  .

Case 1: $i_t = i_\ast$.  Then, $j_\ast > j_t $, since the greedy algorithm has already sampled $(i_t, 1)...(i_t,j_t)$.  

\begin{equation*}
 m_{i_t,j_t} = \sum_{v=1}^k a_v o_{i_tj_tv}> \sum_{v=1}^k a_v o_{i_tj_*v} = m_{i*,j_*} 
\end{equation*}

where the inequality is given by the concavity of the learning curves and that $j_\ast> j_t$.

Case 2: $i_t \neq i_\ast$.  Let $j'$ be the number of batches the greedy algorithm has sampled from group $i_\ast$.  
Then, $m_{i_t,j_t} >= m_{i_\ast, j'+1} >= m_{i_\ast, j_\ast}$, where the first inequality comes from the fact that our algorithm is greedy and the second comes from the concavity of the learning curves.
\label{optimality}
\end{proof}

Theorem~\ref{thm:optimality} shows that the greedy algorithm is provably optimal when the learning curves do not have cross-group effects.
However, numerical experiments suggest that the greedy algorithm is optimal in a wide-variety of settings beyond those satisfying the conditions of the theorem.
Consider, for instance, our running
computer vision example.
The learning curves defined by Eq.~\eqref{eq:gamma}
violate the assumptions of Theorem~\ref{thm:optimality}, 
as the marginal learning rates in each group depend on the number of samples currently collected in all other groups.
Nonetheless, 
we find that the greedy algorithm 
achieves the optimal utility for both the equal utility and the prioritized utility functions, as shown in Table~\ref{tab:synthetic}.

To further investigate the behavior of the greedy algorithm, we conducted an extensive set of numerical experiments.
We specifically considered random problem instances in which the number of groups $K$ varied from 2 to 10, 
costs $\vec{c}$ were randomized such that $c_k \sim \text{UNIF}(0, 1)$,
weights of the utility function $\vec{a}$ were randomized such that $a_k \sim \text{UNIF}(0, 1)$,
and the learning curves were randomized so that: 
\begin{equation}\label{eq:gamma2}
M_k(\vec{n}) = f\left(\sum_{j=1}^K \gamma_{k,j} \cdot \vec{n}_j\right),
\hspace{.5cm}
\gamma_{k,j} \sim \text{UNIF}(0, 1),
\end{equation}
for two functional forms, $f(x) = \log(x)$ and $f(x) = \sqrt{x}$.  
Under all circumstances, we find that the mean absolute difference between the solutions found via convex optimization and the greedy algorithm approaches $0$ as the step size goes to $0$.  
These numerical findings suggest that the greedy algorithm is a robust approach to finding optimal allocations under a wide range of conditions;
analytically characterizing the algorithm's properties would be an interesting direction for future work.  

\section{An application to polygenic risk scores}\label{sec:prs}
\subsection{Background}
We now transition from our simple, stylized deforestation example to a more detailed application involving polygenic risk scores (PRSs).
Polygenic risk scores are statistical models which use the presence of genomic variants in one's DNA sequence in order to estimate risk for developing a complex disease.  
PRSs have been found to be predictive for many complex genetic diseases such as coronary artery disease and Type 2 diabetes \cite{khera2018genome}, and are believed to be promising tools for risk stratification for health interventions more broadly.

However, PRSs have been found to exhibit disparities in performance across groups defined by ancestry \cite{de2018polygenic}.  
Specifically, many PRSs have been found to perform worse in people of African descent. 
The main cause of this performance gap is thought to be a lack of ancestral representation in genome-wide association studies (GWAS), from which the datasets to train polygenic risk scores typically come.  
To date, about 52\% of all GWAS were conducted in populations of European descent compared to 10\% in populations of African descent, and 78\% of individuals who appear in GWAS are of European ancestry compared to 2\% of African ancestry~\cite{sirugo2019missing}. 
Furthermore, 72\% of individuals in GWAS were recruited from only three countries: the United States, the United Kingdom, and Iceland \cite{mills2019scientometric}. Additional work has shown this lack of diversity in GWAS could result in over- or under-estimation of genetic disease risk in understudied populations and could potentially exacerbate health disparities \cite{sirugo2019missing, egede2006race}.

Despite calls for additional representation for non-European ancestries in GWAS and PRSs \cite{sirugo2019missing, popejoy2016genomics, martin2017human}, it is still unclear exactly how a model-builder interested in constructing a PRS should allocate their limited funding between gathering genomic data from people of different ancestries. 
To demonstrate how our framework might be applied in this setting,
we use a simulation framework developed by domain experts \cite{cavazos,kelleher2016efficient} to first generate a synthetic population 
of people with different ancestries,
and then train PRS models under various sampling strategies. 

\subsection{Simulation details}
Following \citet{cavazos}, we simulated genomes of 200,000 people of European (CEU) and African (YRI) descent, along with the presence of a phenotype (disease) with 5\% prevalence in both populations.
We used the simulated data to train separate PRSs in each population, evaluating model performance over a variety of training allocations (see Appendix~\ref{app:mm} for further details).
Out of the 10,000 people who will get the disease (called ``cases'') 
and 190,000 people who will never get the disease (called ``controls''),
we chose a random sample of 5,000 cases and 5,000 controls to be the obtainable training data.
Trained models were evaluated on a holdout test set comprised of the remaining 5,000 cases and 95,000 other randomly selected controls.

For our hypothetical disease, we imagine there is a health intervention that has cost $c$ and benefit $b$.  
That is, for an individual $i$, the intervention has value
\begin{equation}
V = bd_i - c,
\end{equation}
where $d_i$ is an indicator variable for whether the person will eventually get the disease. 
If $\hat{p}$ denotes one's estimated likelihood of developing the disease, based on the available genomic data, the expected value 
of intervening is:
\begin{equation}
\mathbb{E}[V \mid \hat{p}] = b \hat{p} - c.
\end{equation}

Suppose the cost and benefit of the intervention are given by 
$c = 5$ and $b = 100$,
constants which we use for the remainder of our analysis.\footnote{We note that the cost here could either be monetary or health-related, such as radiation exposure from X-rays.}
Then the expected value of the intervention is positive for individuals for whom $\hat{p} > 0.05$, negative 
for $\hat{p} < 0.05$,
and zero for $\hat{p} = 0.05$.
Given the base prevalence of the disease is $5\%$, 
the expected utility of intervening on a random person is 0.
However, if the model-builder is able to identify and selectively treat individuals at high risk for the disease, the intervention can yield positive value.

Given a predictive model $\hat{f}_{T_{\vec{n}}}$ trained on
the genomic dataset $T_{\vec{n}}$,
the value-maximizing intervention strategy is to treat those with estimated risk greater than 0.05.
We define the group-level model performance of a training allocation to be the expected value from applying this decision rule on a random member of the group:
\begin{equation}
M_k(\vec{n}) = \mathbb{E}_{T_{\vec{n}}}\mathbb{E}_{X^0, Y^0}[G(Y^0, \hat{f}_{T_{\vec{n}}}(X^0)) \mid X^0_g=k],
\end{equation}
where
\begin{equation}
G(X_0, Y_0) = \mathbbm{1}_{\hat{f}_{T_{\vec{n}}}(X^0) > 0.05}\cdot(Y^0b-c),
\end{equation}
and the pair $(X^0, Y^0)$ represents the genomic markers and eventual disease status of a random individual belonging to group $k$.

\subsection{Constructing equitable datasets}
We consider a scenario in which the model-builder has budget $B = 5000$, and where samples from each group cost 1 unit, 
where a sample is a single case-control pair.  
The model-builder begins initially with 500 samples from each group, and must then choose how to allocate their budget in increments of $s=100$.   
We evaluate a variety of policies for allocating the budget between sampling from CEU (European descent) and YRI (African descent) data.  
We specifically consider two static policies: (1) representative sampling, where the proportion of training data from both groups mirrors their proportion in the overall population;\footnote{We assume the intervention is being done in the United States, and use the proportion of Black and non-Hispanic white individuals in the 2020 census~\cite{census2020}.} 
and (2) equal sampling, where $n_\text{YRI} = n_\text{CEU}$.  
To adhere to the step-size $s$, we restrict these static policies to the closest allocations with $n_\text{CEU}$ and $n_\text{YRI}$ being multiples of $s$. 
We also consider two active sampling strategies, which allocate the budget sequentially:
(1) sampling from the group which currently has lower model performance, in an effort to achieve performance parity \cite{abernethy2021active};
and (2) our greedy adaptive sampling algorithm discussed in Section~\ref{sec:opt}.  

\begin{algorithm}[t]
\caption{Implementation of \texttt{ESTIMATE\_MARGINAL} for PRS application}\label{alg:est}
\begin{algorithmic}
\State $\texttt{ESTIMATE\_MARGINAL(ALLOC, k)}:$
\Indent
\State $X \gets \texttt{SEQ}(\texttt{MAX}(\texttt{START}[k], \texttt{ALLOC}[k] - (m-1) \cdot \texttt{STEP\_SIZE}), \texttt{ALLOC}[k], \texttt{STEP\_SIZE})$
\State $Y \gets [\texttt{MODEL\_PERF}(x) \texttt{ for x in X}]$
\State $\hat{\beta}, \hat{\texttt{SE}}_\beta \gets \texttt{LINEAR\_REGRESSION(Y, X)}$
\State $Z \gets N_+(\hat{\beta}, \hat{\texttt{SE}_\beta^2})$
\State $\texttt{RETURN}(Z \cdot
\frac{\texttt{STEP\_SIZE}}{\texttt{COST}[k]})$
\EndIndent
\end{algorithmic}
\end{algorithm}

To apply our adaptive sampling method (Algorithm \ref{alg:cap}), the model builder needs to
estimate the marginal improvement in utility $U\left(\vec{n} + \frac{s}{c_k}1_k\right) - U(\vec{n})$ for each group $k$ given their current allocation $\vec{n}$.
We outline our implementation of this estimation problem in Algorithm \ref{alg:est}. 
Our method for estimating the marginal improvement is to keep track of our model performance at each allocation, and then construct a local approximation of the learning curve via linear regression, using the last $m = 5$ measurements of model performance (or all the available points, if fewer than five models have been trained for a given group).  
The choice of $m$ can be thought of as a bias-variance trade-off, where higher $m$ leads to bias because the true slope is decreasing but low $m$ leads to variance because the individual observations of model performance are noisy.  Then, for each group $k \in \{\text{YRI}, \text{CEU}\}$ we get both an estimate $\hat{\beta}_k$ of the increase in performance per training sample,
and a standard error $\hat{SE}_{\beta_k}$ of that estimate.
To account for noise in our estimate, we select the next group to sample based on a draw $\tilde{\beta}_k \sim N_+\left(\hat{\beta}_k, \hat{SE}_{\beta_k}^2\right)$, 
where $N_+$ is the truncated normal distribution, bounded from 0 to $\infty$.
This procedure can be thought of as analogous to Thompson sampling, with a prior that more data cannot decrease model performance.
We apply this stochastic method due to challenges in estimating model performance.  
In a setting where model training was computationally inexpensive, one might alternatively address this problem by bootstrapping the collected data and training and evaluating several models at each training size; in our setting, though, that approach was not feasible, as PRSs are computationally intensive to train.

\begin{figure}[t]
    \centering
    \includegraphics[width=.5\columnwidth]{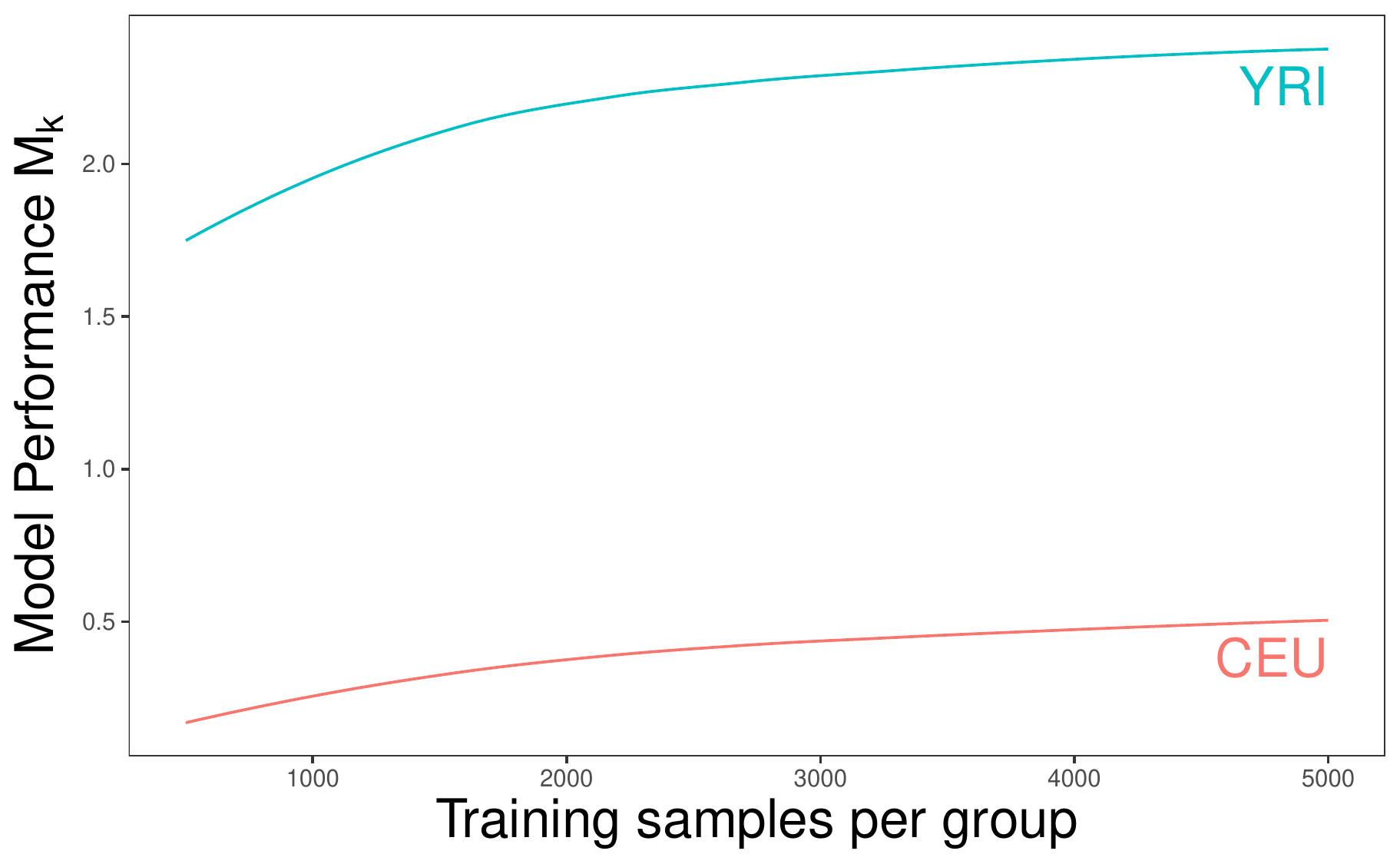}
    \caption{The learning curves of our polygenic risk score model, where each point represents the average per-capita utility for members of a group if the training set contains $x$ number of people in that group.  The PRS both starts with a better performance and improves faster for the YRI group.  
    }
\label{learning_curve}
\end{figure}

\begin{figure}[t]
    \centering
    \includegraphics[width=.5\columnwidth]{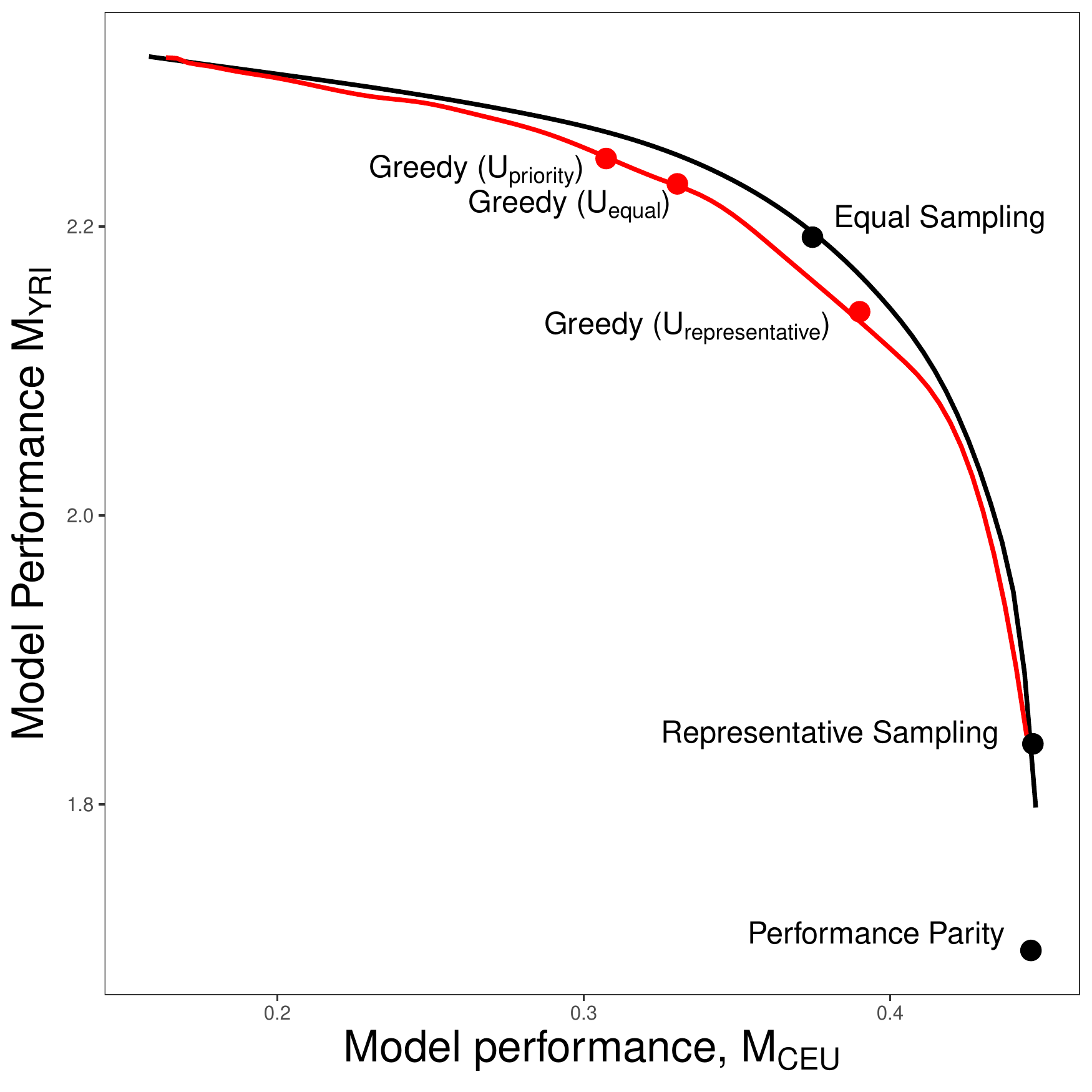}
    \caption{The performance Pareto frontier of our setting, where each point corresponds to an allocation of our budget $B = 5,000$ such that $N_\text{CEU}$ + $N_\text{YRI} = B$, $N_\text{CEU} \geq 500, N_\text{YRI} \geq 500$.  Going left to right, as we increase the proportion of our allocation towards gathering samples from the CEU group, the model performance $M_\text{YRI}$ decreases and $M_\text{CEU}$ increases.  We evaluate strategies for constructing equitable datasets, finding that the greedy adaptive sampling algorithm is able to find near-optimal policies under a wide range of utility specifications.  
    }
\label{perf_plot}
\end{figure}

\subsection{Results}
Following the above setup,
Figure \ref{learning_curve} shows the learning curves of the two group-level performances $M_\text{YRI}$ and $M_\text{CEU}$ as a function of the size of the dataset used to train each group's models, $N_\text{YRI}$ and $N_\text{CEU}$, across 50 simulations. 
In our hypothetical scenario, we find, for a fixed number of training samples, that the polygenic risk score for individuals of African ancestry both starts off with a higher performance at the minimum 500 samples ($M_\text{YRI} = 1.69, M_\text{CEU} = 0.158$), and has improved roughly by twice as much at the maximum possible 5,000 samples
($M_\text{YRI} = 2.31,
M_\text{CEU} = 0.447,
\Delta_\text{YRI} = 0.62,  \Delta_\text{CEU} = 0.289$).
We note that this phenomenon is a consequence of the way we selected the parameters of our simulation; it is intended only as an illustrative example, and is not representative of broader trends (see Appendix \ref{app:mm} for details).

In our setting, the model builder can choose any training allocation $\vec{n} = (N_\text{CEU}, N_\text{YRI})$ such that $N_\text{CEU} + N_\text{YRI} \leq 5000$ (with $N_\text{CEU} \geq 500$ and $N_\text{YRI} \geq 500$, given the initial dataset).
The black line in Figure \ref{perf_plot} shows the tradeoff between $M_\text{CEU}$ and $M_\text{YRI}$ over all possible choices of $\vec{n}$ such that $N_\text{CEU} + N_\text{YRI} = 5000$.
Along this frontier, we plot both the model performances resulting from an equal sampling policy ($\vec{n} = (2500,2500)$), along with that of a representative sampling policy ($\vec{n} = (3300,700)$), which has more samples from $M_\text{CEU}$ because we set our hypothetical intervention in the United States.
We find that the resulting policies differ drastically in where along the frontier they fall.
In particular, relative to the equal-sampling strategy, the representative-sampling strategy (which samples more individuals of European descent)
implicitly sacrifices substantial gains in model performance for individuals of African descent for more modest performance gains for individuals of European descent.

In addition to these two commonly employed static strategies, we plot the results of a third, dynamic strategy, which attempts to equalize model performance by sampling from the worst-performing group at each step.
In our simulation, this strategy always samples from the CEU group, since it has lower performance at all allocations $\vec{n}$.  
This strategy---labeled ``performance parity'' in Figure~\ref{perf_plot}---results in trading off a large amount of $M_\text{YRI}$ for a very small amount of $M_\text{CEU}$. 
Indeed, because the marginal increase of $M_\text{CEU}$ per training sample becomes essentially zero, the point does not even appear on the frontier in Figure~\ref{perf_plot}. 

Where these three sampling strategies---equal, representative, and performance parity---lie on the frontier will, in general, vary depending on the structure of the learning curves.
For instance, in a scenario where CEU performance increased faster than YRI performance, representative sampling would result in trading off a relatively lower amount of YRI performance for a higher amount of CEU performance relative to our setting.  
In particular, as the two static sampling strategies consider only the composition of the training dataset and not its impact on model performance, they will be unstable in their valuations for group-level model performances in different circumstances.

Depending on the training allocation, one can land anywhere on the black frontier in Figure~\ref{perf_plot}. 
But where exactly one might choose to land depends on application-specific considerations. 
We now imagine a model-builder with utility that is linear in the group-level model performances:
$$U(M_\text{CEU}, M_\text{YRI}) = a_\text{CEU}M_\text{CEU} + a_\text{YRI}M_\text{YRI},$$ 
where the weights $\vec{a} = (a_\text{CEU}, a_\text{YRI})$ are non-negative and determine where on the frontier the optimal policy falls.
For various settings of the weights, 
we run our greedy sampling algorithm, initialized at the allocation $\vec{n} = (500, 500)$. 
The red line in Figure \ref{perf_plot}
shows the resulting model performances while we vary $\frac{1}{1000} \leq \frac{a_\text{CEU}}{a_\text{YRI}} \leq 1000$.  We find that our approach is able to identify near-optimal policies over a wide range of specifications for utility, with a small amount of loss due to noise in our estimation of the marginal improvement (Algorithm \ref{alg:est}). 

Finally, we consider the question of how a model-builder might decide to set $\vec{a}$ in their specification of utility.  
If we take the point of view that the benefit $b$ and cost $c$ of the intervention are in units of lives saved, setting $\vec{a} = (p_\text{CEU}, p_\text{YRI})$ to be proportional to the size of the group in the overall population optimizes the total number of lives saved.  The resulting policy given by these weights is labeled ``Greedy $(U_{\text{representative}})$'' in Figure \ref{perf_plot}.
We note that this sampling strategy is distinct from representative sampling, which sets the composition of the training dataset to be proportional to the size of the group, instead of the valuations on model performance.  
In particular, when optimizing for $U_{\text{representative}}$, the greedy strategy recognizes that although the YRI group is smaller, it has higher marginal gains in performance, and thus samples more heavily from that group than does the representative-sampling strategy.  
In this case, the greedy strategy optimizing for $U_\text{representative}$ has both higher group-level performance for the YRI group as well as higher overall performance than  representative sampling.   

Another natural choice might be to set $\vec{a} = (1, 1)$, so that model performance is valued equally among the two groups.  This strategy, labeled ``Greedy $(U_{\text{equal}})$'' in Figure~\ref{perf_plot}, results in drawing more samples from the YRI group compared to the greedy strategy with representative weights, since the size of the groups is ignored and the YRI group, which has a higher marginal improvement per training sample, is prioritized.   We note that this notion of equal value for group-level model performance is different than both model parity (the closest strategy to parity results in sampling only from CEU), and equal sampling, which enforces equality in the training set composition instead of the model performance valuations.

Finally, a model-builder might take the point of view that PRSs have traditionally excluded those of African descent~\citep{de2018polygenic}, and put model performance for that group at a premium by setting $\vec{a} = (1, 1.50)$.  The resulting model performances from running the greedy algorithm with these preferences is labeled ``Greedy $(U_{\text{priority}})$'', and is very close to the resulting performances for the last specification $\vec{a} = (1, 1)$, reflecting that moving further toward the upper left of the plot requires a large trade-off in $M_\text{CEU}$ to achieve a small gain of $M_\text{YRI}$. This pattern is a function of this particular learning curve, and, in a different setting, the priority might result in a much different allocation than the greedy sampling strategy with equal weights.  
\section{Discussion}
Statistical models across a wide variety of domains have been shown to exhibit disparities in model performance, in part due to lack of representation in the datasets they are trained on. To mitigate this problem, we present a framework for a model-builder to specify a preference over resulting group-level model performances, and then formalize the task of constructing a dataset as a constrained optimization problem. We present an adaptive sampling algorithm for constructing datasets which takes into account both the structure of how data from one group affects model performance in the others in addition to the cost of acquiring data.  We showed both empirically and analytically that taking these two factors into account allows our adaptive algorithm to identify near-optimal solutions, and can avoid some of the unintended consequences that can arise with static sampling methods such as equal or representative sampling.
Finally, we demonstrated how our framework allows for the model-builder to efficiently intervene when circumstances dictate that model performance should be prioritized for a given group: for example, due to  traditional models underserving a group, or model performance better translating to impact in that group.  

Our findings can inform practitioners as well as policymakers seeking clarity on what would constitute sufficiently representative and inclusive datasets. In particular, our findings demonstrating drawbacks of static sampling methods suggest that future guidelines or requirements around dataset representativity \cite{act2021proposal} should take care not to codify sampling approaches that are insufficiently flexible in considering all the factors surrounding the construction of a dataset, including the effects of sampling strategies on actual model performance.  

We conclude by noting some important limitations of our analysis.  
First, although our greedy algorithm appears to work well for one natural family of learning curves $M_k$, it may not be an effective approach in every instance. 
There are many types of data and many methods of training models using such data, which can result in a variety of different structures for the learning curves.  
For instance, a deep learning approach to training PRSs might use all available data for a single model instead of training separate models such as in our example~\citep{badre2021deep}. 
A promising direction for future work is to consider how our framework might be applied to a variety of different approaches to building models in different domains. 
Second, our greedy algorithm requires the model-builder to estimate the marginal improvements in $M_k$ at each step, which can be statistically and computationally challenging, especially when there are limited data for certain groups or when training models requires significant computing resources.
Third, in this work we considered a particular specification of utility, but others may be appropriate depending on the setting.  For instance, if data are collected with the purpose of being used in the future in addition to training a model, the utility function might also encode the value of the data for training future models.  
Finally, this method may not be applicable in circumstances where a training dataset cannot be responsibly expanded, such as data regarding individuals' interaction with police or the criminal legal system, or where privacy interests are determined to outweigh model performance or fairness goals; in such cases, approaches leveraging synthetic data or experimenting with alternative modelling options may be more appropriate to address fairness concerns. 

We see our work as part of a broadening of how machine learning practitioners operationalize algorithmic fairness.
In addition to approaches tailored to improving the equity of models trained on static datasets, it is important 
to consider issues that arise at various stages of the training and deployment of statistical models, including constructing equitable training sets ~\citep{matise2011next, aviddataset,galvez2021people,hazirbas2021towards,matise2011next, aviddataset,galvez2021people,hazirbas2021towards}, interventions to bolster model performance for traditionally underserved groups, such as screening~\citep{cai2020fair,noriega2019active, bakker2019fairness}, 
and designing more equitable interventions given a set of risk scores~\citep{L2BF}.  
We hope our work will help support these ongoing efforts.

\begin{acks}
We thank Taylor Cavazos and John Witte for helpful conversations regarding polygenic risk scores, and for developing the original PRS simulation framework that we used in our analysis.  We also thank Jovani Gutierrez for assistance with running experiments. 
\end{acks}

\bibliographystyle{ACM-Reference-Format}
\bibliography{ms}

\appendix

\section{Genetics Glossary}

\textbf{Causal Variant}\indent In the context of Genome-Wide Association Studies (GWAS), causal variants are genetic variants that have a biological effect on polygenic diseases (e.g. coronary heart disease, cancer, diabetes), which are diseases caused by the combined effects of multiple genes. 

\noindent \textbf{Genome} \indent An individual organism's complete set of genetic instructions; DNA. 

\noindent \textbf{Genome-Wide Association Studies (GWAS)} \indent Studies performed for use in genetics research to identify genetic variants present at a higher frequency in individuals with a specific trait (e.g., a disease) in a population. 

\noindent \textbf{Genotype} \indent A subset of genes in an individual organism, which can contribute to a phenotype. 

\noindent \textbf{Minor Allele Frequency (MAF)} \indent The proportion of time the allele that appears less frequently in a given population occurs.

\noindent \textbf{Phenotype} \indent Observable traits such as height, eye color, and presence of a disease in an individual. 

\section{PRS detailed materials and methods}\label{app:mm}

Following \citet{cavazos}, we simulate European (CEU) and African (YRI) ancestry genotypes for chromosome 20, simulating genomes of 200,000 people of European descent and 200,000 people of African descent.  We then computed the minor allele frequency (MAF) for each population throughout the simulated genotypes and ranked the genotypes by the difference $MAF_\text{YRI} - MAF_\text{CEU}$. We chose the top ranked 1,000 variants as our casual variants to simulate a disease where a PRS might have more predictive power in one group, in this case those of African descent.  

For each selected causal variant $i$, we continue following Cavazos et al. \cite{cavazos}, drawing an effect size $\beta_i \sim N(0, \frac{h^2}{1000})$, where $h = \frac{1}{2}$ controls the trait heritability.  We then compute the total genetic liability for individual $j$ as $X_j \sim \sum_{i=1}^{1000} \beta_i g_i$, where $g_i$ is an indicator variable for if the genetic variant appears at location $i$ in person $j$'s DNA sequence. Then, we compute the non-genetic effect as $\epsilon_j \sim N(0, 1-h^2)$.  After both $X$ and $\epsilon$ are standardized ($G = \frac{X-\mu_X}{\sigma_X} * \sqrt{h^2}, E = \frac{\epsilon - \mu_\epsilon}{\sigma_\epsilon} * \sqrt{1 - h^2})$ they are added to obtain the total trait liability (G+E). Each individual is then ranked by their total trait liability and the top 5\% of individuals in the CEU and YRI populations are given the phenotype (disease), $Y = 1$, with the rest having $Y = 0$.  

To train the polygenic risk scores in the CEU and YRI populations, a GWAS is conducted to select genetic variants for inclusion. Genetic variants were selected via a standard two-step process of p-value thresholding and clumping. For each genotype with a MAF $> 1\%$, we compute an odds ratio and assess statistical significance with a chi-squared test, with all genotypes with $p < .01$ being selected. We further filtered the genotypes via clumping to remove highly correlated adjacent genotypes, removing genotypes within a $1$ MB window that have a Pearson correlation of $r = .2$.  For each individual, their  empirical PRS was given by $\sum_{i=1}^V \log(OR_i) g_i$, where $V$ is the number of remaining variants after the clumping + thresholding process, $OR_i$ is the odds ratio for the $i$th selected variant, and $g_i$ is an indicator variable for whether the variant is present in that person.  Lastly, we use Platt scaling to convert each PRS for an individual to a probability of disease risk. 

\end{document}